\RequirePackage{fix-cm}
\documentclass{article}
\usepackage{iclr2027_conference,times}

\usepackage[utf8]{inputenc}
\usepackage[T1]{fontenc}
\usepackage{hyperref}
\usepackage{url}
\usepackage{amsmath,amssymb,amsthm,mathtools}
\usepackage{booktabs,multirow,array,tabularx}
\usepackage{adjustbox}
\usepackage{graphicx}
\usepackage{flafter}
\usepackage{wrapfig}
\usepackage[dvipsnames]{xcolor}
\usepackage{microtype}
\usepackage{enumitem}
\setlist[itemize]{leftmargin=*,itemsep=1.5pt,topsep=2pt,parsep=0pt}
\setlist[enumerate]{leftmargin=*,itemsep=2pt,topsep=3pt,parsep=0pt}
\usepackage[most]{tcolorbox}
\usepackage{needspace}
\usepackage{placeins}

\graphicspath{{figures/}}
\newcommand{\QualityBar}{0.02}

\newcommand{\SyncPass}{3}
\newcommand{\AsyncPass}{10}

\newcommand{\NSyncTraces}{9}

\newcommand{\DswapSync}{2.3\times 10^{-3}}
\newcommand{\DswapAsync}{1.1\times 10^{-3}}

\newcommand{\GrowMseLongF}{0.26}
\newcommand{\GrowMseLongH}{0.34}

\newcommand{\NViol}{803}
\newcommand{\NViolInDisk}{499}
\newcommand{\NRealUnforgivable}{145}

\newcommand{\CIgrH}{[-0.002,+0.037]}
\newcommand{\RecovRegF}{+85}
\newcommand{\RecovRegH}{+99}
\newcommand{\RecovGrowF}{-46}

\newcommand{\RecovPerF}{-7}
\newcommand{\RecovPerH}{+4}
\newcommand{\TileMaxAbs}{4.6\times 10^{-3}}
\newcommand{\TileRatioPersist}{1.45}
\newcommand{\TileRatioRegen}{1.197}
\newcommand{\LatticeMatched}{26}
\newcommand{\LatticeCells}{27}
\newcommand{\LatticeNonMarginal}{25}
\newcommand{\LatticeAnnWrong}{4}

\newcommand{\HorizonLong}{4{,}096}

\newcommand{\NViolRePosCplx}{159}
\newcommand{\ViolReGrow}{198}
\newcommand{\ViolReRegen}{32}
\newcommand{\ViolDiskRegen}{288}

\newcommand{\QualityMarginMin}{35}
\newcommand{\QualityMarginMax}{1{,}078}

\newcommand{\TileMedMargin}{30}
\newcommand{\OtherMseLongMax}{1.3\times 10^{-3}}
\newcommand{\GrowCarriers}{10}

\newcommand{\PersistCarriers}{10}
\newcommand{\RegenCarriers}{4}

\newcommand{\EigWindow}{32}
\newcommand{\NEigSaturated}{23}
\newcommand{\SyncFisherP}{0.003}

\newcommand{\ConeRatioSmall}{5.33}
\newcommand{\ConeFracLarge}{1.00}

\newcommand{\GateOffMaxAbs}{16.5}
\newcommand{\GateOffMedAbs}{1.05}
\newcommand{\GateOffCarrierMaxAbs}{16.5}

\newcommand{\GateOffSeedMaxAbs}{24.4}
\newcommand{\GateOffLeakMat}{6}
\newcommand{\GateOffLeakSeed}{6}
\newcommand{\GateOffLeakFactor}{8.9}

\newcommand{\FailureAliveMse}{0.0290}
\newcommand{\FailureAliveStateHF}{0.0258}
\newcommand{\FailureAliveUpdateHF}{0.0484}

\newcommand{\SwapRatioInterval}{[0.22,1.24]}
\newcommand{\QualifiedAsync}{5}
\newcommand{\QualifiedSync}{1}
\newcommand{\PositiveContrasts}{3}

\definecolor{navy}{RGB}{47,93,140}
\definecolor{brick}{RGB}{181,74,74}
\definecolor{oliveg}{RGB}{110,139,87}

\theoremstyle{plain}
\newtheorem{proposition}{Proposition}
\newtheorem{theorem}{Theorem}

\theoremstyle{remark}

\newtcolorbox{finding}{enhanced,breakable=false,colback=black!4,colframe=black!55,
  boxrule=0.5pt,arc=1.2pt,left=5pt,right=5pt,top=2.5pt,bottom=2.5pt,
  before skip=4pt,after skip=4pt}

\iclrfinalcopy
\title{Where Does Randomness Matter in\\Neural Cellular Automata?}

\author{Fei Zuo \\
Fudan University
\And
Jiaqi Shi \\
Independent Researcher
\And
Yujing Liu \\
Shanghai Pupuda Culture \\
Communication Co., Ltd.}

\begin{document}
\maketitle
\fancyhead[L]{Preprint}
\thispagestyle{fancy}

\begin{abstract}
Stochastic cell updates are often used throughout the life of a neural cellular automaton (NCA), from backpropagation through time to final rollout. This leaves two questions entangled: does update randomness help learn a useful rule, and must that randomness remain at execution? We separate training and evaluation update modes in controlled Growing NCA experiments, then vary the states shown during training. Under the standard constant-rate persist recipe, asynchronous training passes the short-horizon quality test in 10/10 runs, compared with 3/10 synchronous runs. All ten asynchronous models also retain the target for 4,096 steps under deterministic evaluation. For a scalar translation-invariant lattice, we derive an exact mean-square criterion: random masking can damp mean modes, but it also injects variance, and a mean-only test misclassifies four non-marginal settings. Finally, among 30 models that all pass the same reconstruction test, eight of ten grow-trained models become off-target at 4,096 steps, while all persist and regenerate models retain the target; damage recovery separates persist from regenerate. The results distinguish optimization reliability, execution mode, and task-specific behavior instead of treating them as one stability property.
\end{abstract}

\section{Introduction}
\label{sec:intro}

A neural cellular automaton (NCA) learns one local update rule and applies it repeatedly across a grid. The rule is optimized through finite rollouts, but after training the same parameters may be applied for thousands of steps. Random cell-update masks are often used in both stages. It is therefore easy to conflate two different claims: randomness may make optimization easier, and randomness may be required for the learned rule to work when it runs.

Prior NCA work gives reasons to use asynchronous updates, including local operation and robustness to changes in update timing \citep{niklasson2021asynchronicity}. It also shows that the training recipe changes what a learned rule can do: seed-only training can produce patterns that decay or grow, pool training encourages persistence, and damage exposure strengthens repair \citep{mordvintsev2020growing}. Synchronous training has also been reported as brittle in a multi-target self-replication setting, with successful synchronous runs producing outcomes similar to asynchronous ones \citep{sinapayen2023selfreplication}. These findings establish that update schedules and training states matter. They do not, by themselves, separate the update schedule used to learn a rule from the schedule used to execute it, or compare later-use behaviors after a common reconstruction test.

We ask two linked questions. \emph{Where does stochastic updating matter: while learning the rule, while executing it, or both?} And once a rule can reconstruct its target, \emph{which behaviors follow from the states it saw during training?} The first question requires crossing training and evaluation update modes while holding the recipe and optimizer fixed. The second requires comparing models that meet the same short-horizon quality threshold, then testing the distinct tasks of retention and repair.

\begin{figure}[t]
\centering
\includegraphics[width=\textwidth]{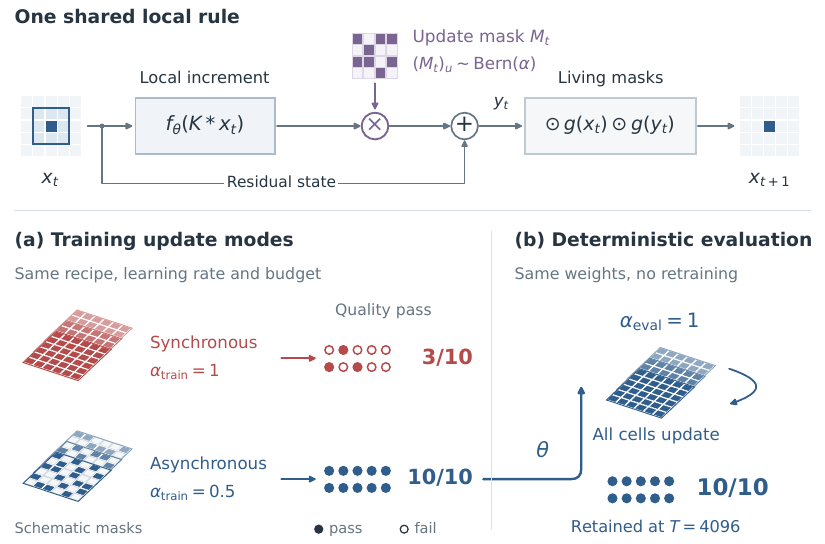}
\caption{\textbf{Random updates help obtain a usable rule under one training setting, but are not required by successful deterministic rollouts.}
The shared local rule separates the random update mask $M_t$ from the living masks.
(a) With the persist recipe, learning rate, and budget held fixed, $\SyncPass/10$ synchronous models and $\AsyncPass/10$ asynchronous models pass the quality test at $T=96$ in their training mode.
(b) With weights frozen, all ten asynchronous models retain the target through $T=\HorizonLong$ deterministic steps from the seed.
Lattices schematically depict update masks, not measured states.}
\label{fig:teaser}
\end{figure}

Our update-mode comparison separates these stages. At the tested constant learning rate, asynchronous training produces ten quality-passing models, compared with three under synchronous training. Yet every asynchronous model retains the target through 4,096 deterministic steps without retraining (Fig.~\ref{fig:teaser}). Four exploratory lower-rate synchronous retrainings also pass, so the observed training difference is specific to the tested optimizer setting. Earlier work already reports synchronous-training brittleness in another NCA task; our result is a controlled train/evaluation crossover on the canonical persistent Growing NCA, not a claim that the failure itself is new.

The update mask also has a two-sided effect in a tractable linear system. We derive an exact second-moment recursion for a scalar translation-invariant lattice with independent Bernoulli cell masks. It shows why averaging the mask can give the wrong answer: random updating changes the mean dynamics and injects state variance. The theorem is a boundary on what can be inferred from a fixed linear rule; it does not explain the nonlinear training failures.

Finally, we compare three established training recipes after all models pass the same short-horizon reconstruction test. Grow, persist, and regenerate differ in which states the rule encounters during optimization. Their long-horizon retention and damage recovery differ accordingly. These measurements make a practical distinction: update randomness affects whether a rule is learned under the tested optimizer, while state exposure affects which later-use tasks that rule supports.

This study contributes:
\begin{enumerate}
\item A controlled separation of training-time and evaluation-time update schedules. Asynchronous training improves final quality-pass rate under one fixed-rate persist recipe, while successful asynchronous models can be evaluated deterministically.
\item An exact mean-square decay criterion for a scalar linear lattice with independent cell masks, showing when mean contraction is insufficient.
\item A matched-quality comparison of retention and repair across established training-state recipes, with long rollouts and damage tests used as task outcomes rather than as one aggregate stability score.
\end{enumerate}

\section{Model and experimental design}
\label{sec:setup}

\textbf{The update rule.} We use the canonical Growing NCA \citep{mordvintsev2020growing}. Each cell stores four RGBA channels and twelve hidden channels. Fixed identity and Sobel filters form a 48-channel local perception $K*x$. A shared two-layer $1\times1$ network $f_\theta$, with 128 hidden units, ReLU activation, and a zero-initialized output layer, predicts an additive state change. A step is
\begin{equation}
\label{eq:nca}
\begin{aligned}
y_t &= x_t+M_t\odot f_\theta(K*x_t),\\
x_{t+1} &= g(x_t)\odot g(y_t)\odot y_t,
\qquad (M_t)_u\sim\mathrm{Bern}(\alpha).
\end{aligned}
\end{equation}
The Bernoulli update masks are independent across cells and steps and shared across the channels within a cell. The living mask $g$ is one when the maximum alpha value in a $3\times3$ neighborhood exceeds $0.1$; it is applied both before and after the candidate update. An entirely dead state is absorbing. We use \emph{update mask} for $M_t$ and \emph{living mask} for $g$ because they play different roles.

\textbf{Two training comparisons.} The targets are flower and heart images on $72\times72$ grids, with five seeds per configuration. Base runs use Adam, constant learning rate $2\cdot10^{-3}$, batch size eight, 8,000 optimization steps, rollout lengths sampled uniformly from 64 to 96, and the same overflow penalty. The update-mode comparison contains twenty persist models: ten synchronous ($\alpha_{\mathrm{train}}=1$) and ten asynchronous ($\alpha_{\mathrm{train}}=0.5$). Each model is evaluated in both update modes. The recipe comparison contains thirty asynchronous models, ten each trained with grow, persist, or regenerate. The ten asynchronous persist models are shared between comparisons, giving forty distinct base models. Four lower-rate synchronous retrainings are exploratory.

Grow starts every rollout from the seed. Persist samples from a pool of 1,024 generated states and replaces the worst sample with the seed. Regenerate adds circular damage to half of the sampled batch. These recipes were introduced in the original Growing NCA work; our comparison controls the architecture, objective, optimizer, training budget, target set, and update probability while changing the training-state distribution.

\textbf{Three behavioral requirements.} A model passes the short-horizon quality test if its RGBA MSE is below $0.02$ at $T=96$ in its training update mode. For long rollouts, an empty state is labeled dead; a non-finite state or one with maximum absolute value above $10^3$ is labeled divergent; and a surviving finite state with final MSE above $0.15$ is off-target. The remaining category is called retained. The long-horizon threshold is deliberately distinct from the stricter reconstruction threshold.

At $T=96$, we also measure finite-time perturbation growth by adding $\delta_0=10^{-3}\mathcal N(0,I)$ on living cells and evolving perturbed and clean states for 64 steps with the same update masks:
\begin{equation}
\label{eq:pert}
\lambda_{\mathrm{pert}}=\frac{1}{64}\log\frac{\|\delta_{64}\|_2}{\|\delta_0\|_2}.
\end{equation}
This is a finite-amplitude trajectory measurement, not a maximum Lyapunov exponent. Damage recovery is evaluated separately over a fixed 80-step window. The full protocols, statistical units, and outcome thresholds are in App.~\ref{app:instruments}.

\section{Random updates change learning success more than execution mode}
\label{sec:noise}

To isolate the update schedule, we vary $\alpha_{\mathrm{train}}$ while fixing the persist recipe and optimizer. We then freeze each model's parameters and evaluate it using both synchronous and asynchronous updates. This design separates the probability of obtaining a usable final rule from a requirement on how that rule must run.

\textbf{Final training success differs under the tested optimizer.} All ten asynchronous models pass the $T=96$ quality test, compared with $\SyncPass$ of ten synchronous models (Table~\ref{tab:factorial}); the two-sided Fisher exact test gives $p=\SyncFisherP$. This is a controlled comparison of two update schedules under one learning rate and one training recipe, not a general statement that synchronous NCA training cannot succeed.

The available synchronous training traces add context but do not replace final evaluation. All $\NSyncTraces$ recorded traces cross a minibatch loss of $10^{-3}$, while seven of ten final synchronous checkpoints fail the quality test. Six failures are empty states and one is alive but deformed; five available traces end near the empty-canvas loss. Because the traces are incomplete and the loss is measured on a training minibatch, they show that low observed loss can occur during optimization, not that every failed run previously held a robust rule. An exploratory single-run checkpoint replay is documented separately in App.~\ref{app:probes}; it illustrates one trajectory and does not estimate how often intermediate rules remain usable.

\textbf{Successful asynchronous models do not need stochastic evaluation.} All ten asynchronous models retain the target for 4,096 deterministic steps from the seed. Among quality-passing models, switching the evaluation update mode changes median absolute MSE by $\DswapAsync$ for asynchronous-trained models and $\DswapSync$ for synchronous-trained models at $T=1024$. The preregistered asymmetry comparison did not establish a directional swap effect. The useful conclusion is narrower: in this cohort, every asynchronous-trained model that passes short-horizon quality can be run deterministically and remains in the retained class.

Four exploratory retrainings of two failed flower configurations at learning rates $10^{-3}$ and $5\cdot10^{-4}$ all pass. This control limits the interpretation of the main comparison: asynchronous updating improves the pass rate at the tested constant learning rate; it is not necessary for synchronous training to succeed.

\begin{table}[!t]
\centering
\caption{\textbf{Training and evaluation update modes.} Quality is tested at $T=96$ in the training mode. Retention is the bounded, on-target class after $\HorizonLong$ deterministic steps. Swap cost is the median absolute MSE change at $T=1024$ among quality-passing models. Each target has five models per training mode.}
\label{tab:factorial}
\small
% Generated from immutable results.
\setlength{\tabcolsep}{4pt}
\renewcommand{\arraystretch}{1.18}
\begin{tabular*}{\textwidth}{@{\extracolsep{\fill}}llccc@{}}
\toprule
Training & Target & Quality pass & Retained (det.) & \shortstack{Swap difference\\$|\Delta\mathrm{MSE}|$} \\
\midrule
Synchronous ($\alpha=1$) & Flower & 1/5 & 1/5 & $2.3\times 10^{-3}$ \\
 & Heart & 2/5 & 3/5 & $2.8\times 10^{-3}$ \\
\addlinespace[4pt]
Asynchronous ($\alpha=0.5$) & Flower & 5/5 & 5/5 & $2.5\times 10^{-3}$ \\
 & Heart & 5/5 & 5/5 & $1.0\times 10^{-3}$ \\
\bottomrule
\end{tabular*}

\end{table}

\section{A random mask damps some mean modes and injects variance}
\label{sec:theory}

The NCA comparison shows when the update mask affects learning, but it does not identify a mechanism for the training failures. We therefore analyze a fixed linear residual rule, where the effect of independently selecting cells can be computed exactly. Consider
\[
x_{t+1}=x_t+M_tAx_t,
\]
where $A$ is translation invariant on a periodic scalar lattice and each diagonal entry of $M_t$ is an independent Bernoulli variable with mean $\alpha$.

\begin{proposition}[Mean dynamics]
\label{prop:disk}
For $0<\alpha\leq1$, $\mathbb E[x_{t+1}]=(I+\alpha A)\mathbb E[x_t]$. If $\mu$ is an eigenvalue of the synchronous map $I+A$, then the corresponding mean multiplier lies in
\[
D_\alpha=\{\mu\in\mathbb C:|\mu-(1-1/\alpha)|<1/\alpha\}.
\]
\end{proposition}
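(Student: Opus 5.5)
The plan is to prove the mean recursion by conditioning on the past, then diagonalize the circulant operator $A$ in the Fourier basis, and finally translate the condition "the mean multiplier has modulus less than one" into a condition on the synchronous eigenvalue $\mu$. I read the second claim in that sense: the mean of the mode belonging to $\mu$ contracts exactly when $\mu\in D_\alpha$. The statement's wording ("the corresponding mean multiplier lies in $D_\alpha$") is ambiguous, and pinning down this reading is the one conceptual point to settle.

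\emph{Step 1 (mean recursion).} Let $\mathcal F_t$ be the $\sigma$-algebra generated by $x_0$ and $M_0,\dots,M_{t-1}$. Then $x_t$ is $\mathcal F_t$-measurable and $M_t$ is independent of $\mathcal F_t$. The lattice is finite and periodic and the masks are bounded, so every $x_t$ is a polynomial in bounded random variables and $x_0$. Hence $x_t$ is integrable whenever $x_0$ is, and conditional expectations are well defined. Conditioning gives
\[
\mathbb E[M_tAx_t\mid\mathcal F_t]=\mathbb E[M_t]\,Ax_t=\alpha Ax_t ,
\]
since $\mathbb E[M_t]=\alpha I$. Taking expectations and using the tower property yields $\mathbb E[x_{t+1}]=(I+\alpha A)\mathbb E[x_t]$.

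\emph{Step 2 (modes).} Translation invariance on the periodic lattice means $A$ is circulant, so it is diagonalized by the discrete Fourier vectors $e_k$, with $Ae_k=a_k e_k$. The synchronous map $I+A$ then has eigenvalue $\mu_k=1+a_k$ on $e_k$. The mean map $I+\alpha A$ has eigenvalue
\[
m_k=1+\alpha a_k=(1-\alpha)+\alpha\mu_k
\]
on the same vector. So each Fourier coefficient of $\mathbb E[x_t]$ evolves independently by multiplication with $m_k$, and the mean multiplier corresponding to $\mu$ is the affine image $m=(1-\alpha)+\alpha\mu$.

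\emph{Step 3 (the disk).} The condition $|m|<1$ reads $|\alpha\mu+(1-\alpha)|<1$. Dividing by $\alpha>0$ gives $|\mu-(1-1/\alpha)|<1/\alpha$, which is exactly $\mu\in D_\alpha$. Therefore the mean mode associated with $\mu$ decays if and only if $\mu\in D_\alpha$. Two checks confirm the geometry. At $\alpha=1$, $D_\alpha$ is the unit disk, recovering the synchronous criterion. For $\alpha<1$, $D_\alpha$ is a larger disk that is tangent to the unit circle at $1$ and contains the unit disk, so masking can only enlarge the set of mean-contracting modes. This is the damping half of the section's message; the variance half is deferred to the second-moment result.
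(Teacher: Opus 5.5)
Your proof is correct and matches the paper's argument. Independence of $M_t$ from $x_t$ gives $\mathbb E[x_{t+1}]=(I+\alpha A)\mathbb E[x_t]$, the mean multiplier attached to $\mu$ is $1+\alpha(\mu-1)$, and rearranging $|1+\alpha(\mu-1)|<1$ yields $\mu\in D_\alpha$; you read ``lies in $D_\alpha$'' the same way the paper's proof does. Your Fourier diagonalization is a harmless extra, since $I+\alpha A=(1-\alpha)I+\alpha(I+A)$ gives the eigenvalue correspondence for any $A$, which is what the paper's substitution $a=\mu-1$ uses, and your filtration argument is a more careful version of its one-line independence step.
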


The disk $D_\alpha$ passes through $+1$; at $\alpha=1$ it is the unit disk, while as $\alpha$ decreases its left edge moves outward. In particular, for a unit-modulus mode $\mu=e^{i\theta}$,
\[
|(1-\alpha)+\alpha e^{i\theta}|^2
=1-2\alpha(1-\alpha)(1-\cos\theta).
\]
Partial updates damp non-neutral oscillatory modes in the mean. This statement concerns the expected state, not the energy of a randomly updated trajectory.

\begin{theorem}[Exact second-moment dynamics]
\label{thm:ms}
Let $x_t$ be a scalar field on the $N^d$ periodic lattice, and let $A$ have Fourier symbol $\hat a(\omega)$. With masks independent across cells and time, the expected modal power $P_t(\omega)=\mathbb E|\hat x_t(\omega)|^2$ satisfies
\begin{equation}
\label{eq:ms}
P_{t+1}(\omega)=d_\alpha(\omega)P_t(\omega)
+\frac{\alpha(1-\alpha)}{N^d}
 \sum_{\omega'}|\hat a(\omega')|^2P_t(\omega'),
\end{equation}
where $d_\alpha(\omega)=|1+\alpha\hat a(\omega)|^2$. All non-neutral modes decay in mean square for every finite-second-moment initial state if and only if $d_\alpha(\omega)<1$ on those modes and
\begin{equation}
\label{eq:criterion}
S=\sum_{\omega:\hat a(\omega)\ne0}
\frac{\alpha(1-\alpha)|\hat a(\omega)|^2}
{N^d[1-d_\alpha(\omega)]}<1.
\end{equation}
Neutral modes are excluded from this decay criterion.
\end{theorem}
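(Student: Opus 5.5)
The proof has two parts. First we derive the recursion \eqref{eq:ms} by writing the mask as its mean plus centred noise. Then we analyse the resulting positive linear system on the vector of modal powers.

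\textbf{Deriving \eqref{eq:ms}.} Write $M_t=\alpha I+E_t$, where $E_t$ is diagonal with independent entries $\epsilon_u=(M_t)_u-\alpha$. These entries have mean zero and variance $\alpha(1-\alpha)$, and $E_t$ is independent of $x_t$. Let $z_t=Ax_t$, so that $\hat z_t(\omega)=\hat a(\omega)\hat x_t(\omega)$. The update then reads
\[
x_{t+1}=(I+\alpha A)x_t+E_t z_t,
\]
and its Fourier transform is
\[
\hat x_{t+1}(\omega)=(1+\alpha\hat a(\omega))\hat x_t(\omega)+\widehat{E_tz_t}(\omega).
\]
Conditioning on $x_t$, the cross term has zero mean because $\mathbb E[\epsilon_u]=0$. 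For the noise term, use $\mathbb E[\epsilon_u\epsilon_v]=\alpha(1-\alpha)\delta_{uv}$:
\[
\mathbb E\bigl[|\widehat{E_tz_t}(\omega)|^2\,\big|\,x_t\bigr]=\alpha(1-\alpha)\sum_u |z_t(u)|^2 .
\]
This holds for the unnormalised transform $\hat f(\omega)=\sum_u f(u)e^{-i\omega\cdot u}$, since every phase factor has unit modulus. Parseval gives
\[
\sum_u|z_t(u)|^2=N^{-d}\sum_{\omega'}|\hat a(\omega')|^2|\hat x_t(\omega')|^2 .
\]
Taking full expectations yields \eqref{eq:ms}, with $d_\alpha(\omega)=|1+\alpha\hat a(\omega)|^2$. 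The injected variance is white: it is the same in every $\omega$, which is why the second term does not depend on $\omega$.

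\textbf{Reducing to a positive linear system.} Neutral modes have $\hat a(\omega)=0$, so $d_\alpha(\omega)=1$ and they contribute nothing to the sum in \eqref{eq:ms}. They only receive noise, so we drop them from the criterion. Let $\mathcal A=\{\omega:\hat a(\omega)\neq0\}$. On $\mathcal A$ the recursion is $P_{t+1}=(D+uv^\top)P_t$, where
\[
D=\mathrm{diag}(d_\alpha),\qquad u=\tfrac{\alpha(1-\alpha)}{N^d}\mathbf 1,\qquad v=(|\hat a|^2)_{\omega\in\mathcal A}.
\]
Here $D+uv^\top$ is entrywise nonnegative, and $v>0$ on $\mathcal A$. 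When $0<\alpha<1$ we also have $u>0$, so the matrix is irreducible. The case $\alpha=1$ is trivial: the criterion reduces to $d_1<1$ and $S=0$. Every nonnegative vector is realisable as a vector of initial powers, for instance by a deterministic state with prescribed Fourier magnitudes. Therefore ``all modes in $\mathcal A$ decay for every finite-second-moment initial state'' is equivalent to $\rho(D+uv^\top)<1$. Sufficiency is immediate, and for necessity we start from the Perron eigenvector.

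\textbf{Spectral radius via a rank-one perturbation (main step).} We characterise $\rho(D+uv^\top)<1$ using the determinant lemma. Suppose first that all $d_\alpha<1$ on $\mathcal A$. For $\lambda\ge1$,
\[
\det(\lambda I-D-uv^\top)=\det(\lambda I-D)\,\bigl(1-v^\top(\lambda I-D)^{-1}u\bigr).
\]
The function $\phi(\lambda)=v^\top(\lambda I-D)^{-1}u=\sum_{\omega}\frac{\alpha(1-\alpha)|\hat a(\omega)|^2}{N^d(\lambda-d_\alpha(\omega))}$ is strictly decreasing on $(\max d_\alpha,\infty)$ and tends to $0$. The Perron root is the unique $\lambda$ there with $\phi(\lambda)=1$. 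Hence $\rho<1$ if and only if $\phi(1)=S<1$.

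Conversely, if some $d_\alpha(\omega)\ge1$ on $\mathcal A$, then $\rho(D+uv^\top)\ge\rho(D)\ge1$ by monotonicity of the spectral radius for nonnegative matrices. So decay forces both $d_\alpha<1$ and $S<1$.

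The delicate points are the necessity direction and the boundary cases, namely $\rho=1$ and the reducible case. For necessity I would test against the Perron eigenvector $w\ge0$, taken as the initial power vector, which shows that $P_t$ does not tend to $0$.
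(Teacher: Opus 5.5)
Your proposal is correct. The derivation of \eqref{eq:ms} is the paper's: split $M_t$ into $\alpha I$ plus centred noise, condition on $x_t$, drop the cross term, and apply Parseval. You use the unnormalised transform where the paper uses the unitary one. The factor $N^{-d}$ comes out in the same place, because the recursion is invariant under rescaling the transform.

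The genuine difference is in the spectral step. The paper never locates the Perron root. It plugs the positive test vector $q(\omega)=[1-d_\alpha(\omega)]^{-1}$ into $\mathcal T=\operatorname{diag}(d_\alpha)+\mathbf 1c^\top$ and notes that $\mathcal Tq-q=(S-1)\mathbf 1$. It then reads off $\rho(\mathcal T)<1$ or $\rho(\mathcal T)\ge 1$ from the subinvariance characterisation of the spectral radius. That takes two lines and holds for any nonnegative matrix, so it needs neither irreducibility nor your separate $\alpha=1$ case.

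Your route uses the determinant lemma and the secular function $\phi$. It needs $u,v>0$ so that $\phi\to\infty$ at $\max d_\alpha$. It also needs a Perron--Frobenius step you leave implicit: $\rho$ is itself an eigenvalue and $\rho\ge\lambda^*>\max d_\alpha$, so $\phi(\rho)=1$ and hence $\rho=\lambda^*$. In exchange, you get more than the criterion. You obtain the exact mean-square rate $\rho(\mathcal T)=\lambda^*$, where $\phi(\lambda^*)=1$, and the explicit Perron vector $w(\omega)\propto[\lambda^*-d_\alpha(\omega)]^{-1}$.

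Your necessity argument is also more explicit than the paper's, which simply asserts that decay requires $\rho(\mathcal T)<1$. One claim should be tightened, though. For a real field, $|\hat x(\omega)|=|\hat x(-\omega)|$, so not every nonnegative power vector can be realised. This does not break your argument. For a real kernel, $d_\alpha$ and $|\hat a|^2$ are even in $\omega$, so $w$ is even and is realised by $\hat x_0=\sqrt{w}$. More simply, in every case you can start from white noise: its flat power vector $P_0$ satisfies $P_0\ge c\,w$ for some $c>0$, so $P_t\ge c\,\rho^t w$ does not tend to zero when $\rho\ge 1$.
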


The recursion is a diagonal modal update plus a rank-one power injection. The first term propagates each mode under the averaged rule; the second is the variance introduced by the random mask and couples the modal powers. Thus, replacing the mask by its mean can overstate stability. The criterion makes that discrepancy explicit through $S$. Proofs, including the neutral-mode case, are in App.~\ref{app:proofs}.

We test the criterion on a $32\times32$ diffusion lattice with update $x\leftarrow x+M(c\,\mathrm{Lap}\,x)$. The second-moment test matches all $\LatticeNonMarginal$ non-marginal simulation labels and resolves $\LatticeAnnWrong$ settings that the mean-only test calls stable. The complete grid contains $\LatticeCells$ configurations; the two exactly marginal cases are reported separately because finite-time labels there need not match asymptotic decay (Fig.~\ref{fig:theory}, App.~\ref{app:lattice}). This fixed-rule result limits the interpretation of update randomness: it can suppress selected mean modes while still increasing trajectory variance. It is not a derivation of the nonlinear training outcome in Sec.~\ref{sec:noise}.

\begin{figure}[!t]
\centering
\includegraphics[width=\textwidth]{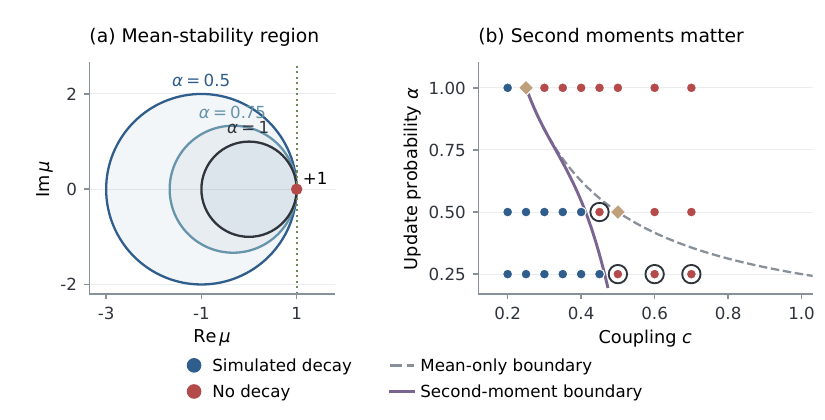}
\caption{\textbf{Mean contraction and mean-square decay are different tests.}
(a) Mean-stability disks for different update rates.
(b) Simulated outcomes on the diffusion lattice. The exact second-moment criterion separates the non-marginal stable and unstable cases; rings mark the mean-only disagreements and diamonds mark the two marginal configurations.}
\label{fig:theory}
\end{figure}

\section{Training-state exposure determines retention and repair}
\label{sec:regimes}

Update timing is only one training choice. The distribution of states used by the loss determines which parts of the learned trajectory receive direct pressure. To compare these effects, we evaluate grow, persist, and regenerate models that all pass the same $T=96$ quality threshold, then test retention through 4,096 steps and recovery after damage.

\textbf{Equal reconstruction does not imply equal later behavior.} All thirty recipe models pass the short-horizon quality test, with group median errors from $\QualityMarginMin$ to $\QualityMarginMax$ times below the threshold. By $T=4096$, eight of ten grow models are off-target, whereas all ten persist and all ten regenerate models remain in the retained class (Table~\ref{tab:regimes}). These outcomes are measured after training and do not follow from the pass threshold alone.

The trajectory perturbation measurement provides a complementary, but not interchangeable, description. Persist models have negative mean $\lambda_{\mathrm{pert}}$ on both targets, while grow models have positive mean values. The grow-minus-persist bootstrap intervals exclude zero on both targets. Regenerate models span positive and negative responses; their group intervals include zero on both targets. Positive perturbation growth is not itself a failure label: five models with positive $\lambda_{\mathrm{pert}}$ still retain the pattern. We therefore report the direct long-horizon task outcome alongside the trajectory measurement (Fig.~\ref{fig:regimes}).

\textbf{Damage exposure adds a distinct behavior.} Under the same damage protocol, regenerate models reduce damage-induced error by a mean of $\RecovRegF\%$ on flower and $\RecovRegH\%$ on heart. Persist changes error little on average ($\RecovPerF\%$ and $\RecovPerH\%$), while grow worsens it ($\RecovGrowF\%$ on each target). The grow-minus-regenerate interval on heart includes zero, so the target-specific contrast is not equally strong everywhere. These results are consistent with the training inputs: pool states expose models to formed patterns, while circular damage directly trains recovery. The recipes themselves are established; our contribution is the controlled comparison of their task outcomes under one architecture and protocol.

\begin{table}[!t]
\centering
\caption{\textbf{Models with similar reconstruction quality learn different later-use behavior.} MSE columns are medians over five models per target. Perturbation growth is the model mean, with bootstrap 95\% intervals below. Recovery is the mean percentage under the frozen damage protocol. Retention uses the long-horizon threshold, not the stricter $T=96$ quality threshold.}
\label{tab:regimes}
\small
% Generated from immutable results.
{\fontsize{8.5}{10.5}\selectfont
\setlength{\tabcolsep}{4pt}
\renewcommand{\arraystretch}{1.18}
\begin{tabular*}{\textwidth}{@{\extracolsep{\fill}}llccccc@{}}
\toprule
Recipe & Target & \shortstack{MSE\\$T=96$} & \shortstack{$\lambda_{\mathrm{pert}}$\\95\% CI} & \shortstack{MSE\\$T=4096$} & \shortstack{Recovery\\(\%)} & Retained \\
\midrule
Grow & Flower & $6.6\times 10^{-5}$ & \shortstack{$+0.017$\\$[+0.011,+0.021]$} & $0.26$ & $-46$ & 1/5 \\
 & Heart & $1.9\times 10^{-5}$ & \shortstack{$+0.023$\\$[+0.017,+0.028]$} & $0.34$ & $-46$ & 1/5 \\
\addlinespace[4pt]
Persist & Flower & $1.1\times 10^{-4}$ & \shortstack{$-0.024$\\$[-0.031,-0.015]$} & $6.5\times 10^{-4}$ & $-7$ & 5/5 \\
 & Heart & $6.1\times 10^{-5}$ & \shortstack{$-0.033$\\$[-0.040,-0.027]$} & $8.5\times 10^{-5}$ & $+4$ & 5/5 \\
\addlinespace[4pt]
Regenerate & Flower & $4.3\times 10^{-4}$ & \shortstack{$-0.008$\\$[-0.017,+0.006]$} & $7.0\times 10^{-4}$ & $+85$ & 5/5 \\
 & Heart & $5.7\times 10^{-4}$ & \shortstack{$+0.002$\\$[-0.015,+0.025]$} & $1.3\times 10^{-3}$ & $+99$ & 5/5 \\
\bottomrule
\end{tabular*}
}

\end{table}

\begin{figure}[!t]
\centering
\includegraphics[width=\textwidth]{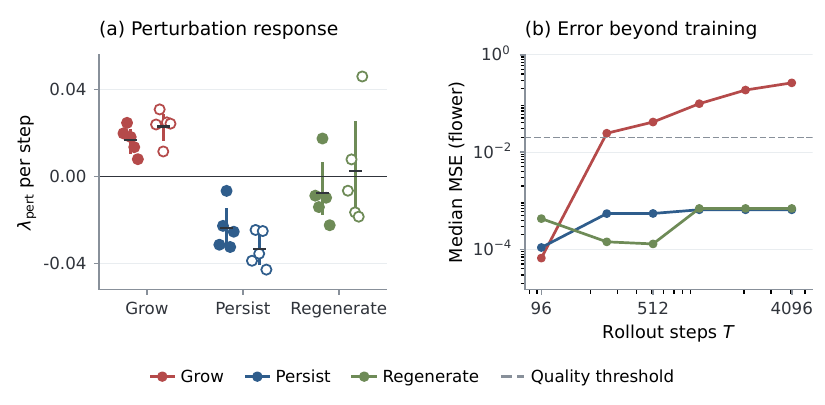}
\caption{\textbf{State exposure changes trajectory response and long-horizon error.}
(a) Individual finite-time perturbation growth and group intervals.
(b) Flower median MSE as rollout length increases beyond the maximum training horizon; heart outcomes are summarized in Table~\ref{tab:regimes}.}
\label{fig:regimes}
\end{figure}

\textbf{A frozen local spectrum does not substitute for these tests.} All thirty recipe models have a largest returned frozen-state Jacobian eigenvalue above one, yet persist and regenerate models retain the target while most grow models become off-target. This measurement does not separate the observed fates. Removing the living masks leaves all evaluated rollouts numerically bounded, while two regenerate models change target class. The full eigenvalue counts and mask-removal controls are reported in App.~\ref{app:eigs} and App.~\ref{app:gateoff}. These controls reinforce the need to specify which property is being tested: local expansion, numerical boundedness, target retention, and damage recovery are not interchangeable.

\section{Discussion}
\label{sec:design}

The results support a simple design sequence for a learned local rule. First, choose an update schedule that gives reliable optimization under the available budget. In the tested fixed-rate persist setup, asynchronous updates improve the final pass rate; the lower-rate synchronous probes show that this is an optimizer-dependent advantage. Second, test whether the learned model requires stochastic updates at execution. In our cohort, every passing asynchronous model remains on-target under deterministic evaluation. Third, train on the states that the deployed rule must handle: generated states support retention, and explicit damage exposure supports repair. Reconstruction loss, local spectrum, and finite-time perturbation growth each answer only part of this sequence.

The analytical result clarifies one limit of this advice. Independent cell masks change expected linear dynamics and add a variance term, so random updating is not a universal stability switch. The exact criterion applies only to a fixed scalar translation-invariant linear operator. It does not predict the absorbing-state failures of the nonlinear NCA or replace direct evaluation of the trained rule.

The scope is intentionally narrow: two synthetic targets, one Growing NCA family, five seeds per target and recipe, one primary learning rate, and a finite rollout horizon. Synchronous-training failures have prior precedent, and our lower-rate probes are exploratory. The results establish a controlled distinction for this model and protocol; broader claims require other targets, architectures, and optimizer schedules.

\section{Related work}
\label{sec:related}

\textbf{NCA training and applications.} Growing NCA establishes seed growth, pool-based persistence, and damage-based regeneration \citep{mordvintsev2020growing}. Subsequent work explores asynchronous update timing, identity mappings, and applications across dynamic textures, classification, graph and generative modeling, and 3D morphogenesis \citep{niklasson2021asynchronicity,stovold2025identity,pajouheshgar2023dynca,randazzo2020selfclassifying,grattarola2021gnca,palm2022vnca,tesfaldet2022vitca,sudhakaran2021minecraft,najarro2022hypernca,mordvintsev2022isotropic}. Recent reviews synthesize NCA architectures and reference implementations \citep{ncareview2026}. Synchronous training has also been reported as brittle in multi-target self-replication \citep{sinapayen2023selfreplication}.

\textbf{Attractor geometry and dynamical analysis.} Recent work examines deterministic NCA attractors, perturbation responses, and transient dynamics \citep{attractors2026,masumori2026selfmaintenance,transient2026}. We complement this perspective through controlled train/evaluation crossovers and common-quality recipe comparisons under the canonical Growing NCA setting. Broader dynamical foundations span classic cellular automata, criticality, morphogenetic pattern formation, and continuous artificial life \citep{vonneumann1966,gardner1970life,wolfram1984universality,langton1990edge,bak1987soc,turing1952chemical,chan2019lenia}, as well as reservoir computing with evolved critical NCA \citep{pontesfilho2025reservoir}.

\textbf{Stochastic updates, neural dynamics, and stability.} Multiplicative noise, dropout, and stochastic depth regularize recurrent and deep networks \citep{jim1996noise,lim2021noisyrnn,srivastava2014dropout,huang2016stochasticdepth,bertsekas1989parallel,combettes2015stochastic}. We use classical stochastic stability tools \citep{khasminskii2012stochastic,horn2013matrix} to derive exact second-moment lattice recursions. Residual dynamics, signal propagation, and nonnormal spectra further govern information flow and transient growth \citep{haber2017stable,poole2016exponential,schoenholz2017deep,trefethen2005spectra}. Connections between cellular automata and neural networks provide architectural context \citep{gilpin2019cellular,springer2021hard}. Optimization and recurrent dynamics in related implicit and continuous models are studied in \citet{bengio1994learning,pascanu2013difficulty,chen2018neuralode,bai2019deq}.

\section{Conclusion}
\label{sec:conclusion}

Stochastic cell updates play different roles during learning and execution. Under one fixed-rate persist recipe, asynchronous training yields more final quality passes, while every passing asynchronous model can be evaluated deterministically. In a linear lattice, the same random mask both changes mean propagation and injects variance. After reconstruction, the states used during training distinguish retention from repair. These results make update schedule, state exposure, and task evaluation separate design choices for Growing NCA.

\subsubsection*{Ethics statement}
This work studies small generative lattice models on two synthetic images. It involves no human subjects or personal data.

\subsubsection*{Reproducibility statement}
The appendix provides the architecture, hyperparameters, evaluation definitions, original comparison criteria, model-level measurements, and proof details. It distinguishes prespecified comparisons from exploratory controls and states the update tapes and statistical units.

\subsubsection*{AI use disclosure}
Large language models were used for language polishing. The authors reviewed and take responsibility for the manuscript.

\bibliography{references}
\bibliographystyle{iclr2027_conference}
\clearpage
\appendix
\section{Study design and interpretation}
\label{app:discussion}

\textbf{Controlled comparisons.} The update-mode comparison fixes the persist recipe and optimizer while varying the training update probability. The recipe comparison fixes the network and base optimization settings while changing the states used for training. All thirty recipe models pass a common short-horizon quality threshold.

\textbf{Interpretation of measurements.} The mean and second-moment calculations concern a fixed linear operator. The frozen Jacobian is measured at one state with its living masks held fixed. Finite-time perturbations follow a changing trajectory, while long-horizon retention and damage recovery directly evaluate task outcomes. These measurements describe different properties.

\textbf{Protocol status.} The update-mode, recipe, perturbation, recovery, and lattice comparisons follow their recorded protocols. Final-state inspection, lower-rate retraining, and living-mask removal are exploratory controls. Appendix~\ref{app:registry} records the original endpoint definitions and outcomes, including the failed spatial-ratio endpoint and the marginal lattice cases.

\section{Proofs}
\label{app:proofs}

The linear model is $x_{t+1}=x_t+M_tAx_t$, where $A$ is fixed and $M_t=\operatorname{diag}(m_u)$. The variables $m_u$ are independent Bernoulli draws with parameter $\alpha$, independent across cells and time and independent of $x_t$. The model does not include the living masks of Eq.~\eqref{eq:nca}.

\subsection{Proof of Proposition~\ref{prop:disk}}

Independence gives $\mathbb E[M_tAx_t]=\alpha A\mathbb E[x_t]$, so
\[
\mathbb E[x_{t+1}]=(I+\alpha A)\mathbb E[x_t].
\]
A finite-dimensional linear iteration converges to zero from every initial state exactly when all its eigenvalues have modulus below one. Substituting $a=\mu-1$ yields
\[
|1+\alpha(\mu-1)|<1
\iff
\left|\mu-\left(1-\frac1\alpha\right)\right|<\frac1\alpha.
\]
The disk has center $1-1/\alpha$ and radius $1/\alpha$. Its rightmost point is $+1$, which is a boundary point rather than a strictly stable multiplier. As $\alpha$ decreases, the disks approach the half-plane $\operatorname{Re}\mu<1$.

For $\mu=e^{i\theta}$, direct expansion gives
\[
|(1-\alpha)+\alpha e^{i\theta}|^2
=(1-\alpha)^2+2\alpha(1-\alpha)\cos\theta+\alpha^2
=1-2\alpha(1-\alpha)(1-\cos\theta).
\]
This is strictly below one for $0<\alpha<1$ and $\theta\notin2\pi\mathbb Z$. At $\alpha=1/2$ and $\theta=\pi$, it is zero. On the other hand, if $\operatorname{Re}\mu\geq1$, then
\[
\operatorname{Re}[1+\alpha(\mu-1)]
=1+\alpha(\operatorname{Re}\mu-1)\geq1,
\]
so the modulus cannot be below one. This also shows that any multiplier outside the unit disk admitted by $D_\alpha$ must have real part below one.
\hfill$\square$

\subsection{Proof of Theorem~\ref{thm:ms}}

Write $M_t=\alpha I+D_t$, with centered independent diagonal entries of variance $\alpha(1-\alpha)$. Then
\[
x_{t+1}=(I+\alpha A)x_t+D_tAx_t.
\]
For the scalar translation-invariant operator on the $N^d$ torus, the unitary Fourier transform diagonalizes the first term,
\[
\widehat{(I+\alpha A)x_t}(\omega)
=(1+\alpha\hat a(\omega))\hat x_t(\omega).
\]
Condition on $x_t$ and put $v=Ax_t$. The cross term between the mean update and the noise term has zero expectation. If $D_t=\operatorname{diag}(d_u)$, independence of its entries gives
\begin{align*}
\mathbb E\!\left[|\widehat{D_tv}(\omega)|^2\,\middle|\,x_t\right]
&=\frac1{N^d}\sum_{u,u'}
 e^{-i\omega\cdot(u-u')}\mathbb E[d_ud_{u'}]v_uv_{u'}^*\\
&=\frac{\alpha(1-\alpha)}{N^d}\sum_u|v_u|^2\\
&=\frac{\alpha(1-\alpha)}{N^d}\sum_{\omega'}|\hat v(\omega')|^2.
\end{align*}
The last step is Parseval's identity. Taking expectation over $x_t$ and using $\hat v(\omega')=\hat a(\omega')\hat x_t(\omega')$ proves Eq.~\eqref{eq:ms}. No independence assumption on different Fourier coefficients of $x_t$ is needed.

Consider the non-neutral modes $\Omega_*=\{\omega\,\mid\,\hat a(\omega)\neq0\}$. Their power vector evolves autonomously under the nonnegative matrix
\[
\mathcal T=\operatorname{diag}(d_\alpha)+\mathbf1c^\top,
\qquad
c(\omega)=\frac{\alpha(1-\alpha)|\hat a(\omega)|^2}{N^d}.
\]
Mean-square decay requires $\rho(\mathcal T)<1$. Since the injected power is nonnegative, $d_\alpha(\omega)<1$ on $\Omega_*$ is necessary. Under that condition, define the positive vector $q(\omega)=[1-d_\alpha(\omega)]^{-1}$. Its update satisfies
\[
(\mathcal Tq)(\omega)
=d_\alpha(\omega)q(\omega)+S,
\qquad
(\mathcal Tq)(\omega)-q(\omega)=S-1.
\]
If $S<1$, then $\mathcal Tq<q$ entrywise, which implies $\rho(\mathcal T)<1$. If $S\geq1$, then $\mathcal Tq\geq q$, which implies $\rho(\mathcal T)\geq1$. These statements follow from the positive-vector characterization and monotonicity of the spectral radius for nonnegative matrices \citep{horn2013matrix}. Hence Eqs.~\eqref{eq:ms} and~\eqref{eq:criterion} give the necessary and sufficient condition for mean-square asymptotic decay.

For a neutral mode, $\hat a(\omega)=0$ and $d_\alpha(\omega)=1$. Its power receives the common injection but contributes nothing to its size. When the non-neutral modes decay, their injection is summable, so the neutral power remains bounded but need not tend to zero. This is why the theorem excludes neutral modes from the decay claim.
\hfill$\square$

\subsection{Locality under spatial and temporal extension}

\begin{proposition}[Spatial locality and temporal composition]
\label{prop:locality}
Let a translation-equivariant local rule on $\mathbb Z^2$ have dependence radius $r$ in the maximum norm. After $T$ steps, a cell's state depends only on initial states within distance $rT$ and, for random updates, the masks in the corresponding space-time dependency cone. Jointly translating the initial state and mask field translates the output. Independent identically distributed masks therefore give translation equivariance in distribution.

On a square grid of side $L$, at most
\[
1-(1-2rT/L)_+^2
\]
of the cells can have a dependency cone intersecting the boundary. Matching interior cones give matching states under coupled masks, or the same distribution under identically distributed masks. Deterministic time extension composes the update map repeatedly. For a random mask tape $\xi$, with $\Phi_T^\xi$ denoting its first $T$ steps and $\sigma^T\xi$ the shifted tape,
\[
\Phi_{T+S}^\xi
=\Phi_S^{\sigma^T\xi}\circ\Phi_T^\xi.
\]
\end{proposition}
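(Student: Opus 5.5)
I will prove the four claims in the order stated, each by a direct argument from the definition of a local rule.

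\textbf{Dependency cone.} Write one step as $x_{t+1}(u)=F\bigl(x_t|_{B_r(u)},\,m_t(u)\bigr)$, where $B_r(u)$ is the max-norm ball of radius $r$ around $u$. For the NCA of Eq.~\eqref{eq:nca} this form holds because the perception $K*x$ and the living mask $g$ each read a $3\times3$ neighborhood. The nested use of $g$ (applied before and after the update) is absorbed by taking $r$ to be the total radius of one full step. Induction on $T$ then shows that $x_T(u)$ is a function of $x_0|_{B_{rT}(u)}$ and of the masks $m_s(v)$ with $v\in B_{r(T-1-s)}(u)$ for $0\le s<T$. The inductive step substitutes the step-$T$ representation into the step-$(T-1)$ dependence and uses $B_r(B_{r(T-1)}(u))\subseteq B_{rT}(u)$, which is the triangle inequality for the max norm.

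\textbf{Equivariance.} Define the shift $(\tau_a x)(u)=x(u-a)$ and apply it to both the state and the mask field. Because the same $F$ is used at every cell, one step commutes with $\tau_a$, and induction extends this to $T$ steps. For i.i.d.\ masks, the law of the mask field is invariant under $\tau_a$. Hence $\Phi_T(\tau_a x_0,\xi)$ has the same law as $\Phi_T(\tau_a x_0,\tau_a\xi)=\tau_a\Phi_T(x_0,\xi)$.

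\textbf{Boundary fraction.} The main delicacy is the counting bound, specifically what "boundary" means. I would interpret a cone as intersecting the boundary when $B_{rT}(u)$ is not contained in the grid. Such a cell has some coordinate within $rT$ of an edge, which leaves an interior square of side $(L-2rT)_+$ for the remaining cells. This gives a boundary fraction of at most $1-(1-2rT/L)_+^2$, with lattice rounding only helping since the claim is an upper bound. Next, take two grids (or two states) that agree on $B_{rT}(u)$ and are driven by the same masks on the cone. The cone lemma gives equal $x_T(u)$. With identically distributed but uncoupled masks, the cone inputs have the same law, so the outputs agree in distribution.

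\textbf{Composition.} This is the semigroup identity. Write $\Phi_T^\xi$ as the composition of the one-step maps driven by $\xi_0,\dots,\xi_{T-1}$. The last $S$ steps use $\xi_T,\dots,\xi_{T+S-1}$, which are exactly the first $S$ entries of $\sigma^T\xi$. Associativity of composition then gives $\Phi_{T+S}^\xi=\Phi_S^{\sigma^T\xi}\circ\Phi_T^\xi$, and the deterministic case is the constant tape.
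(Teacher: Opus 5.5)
Your proof is correct and matches the paper's argument step for step: induction on the cone radius, translating every input of the shared rule, counting cells at least $rT$ from all four edges, and splitting the update sequence at step $T$. One small correction: in Eq.~\eqref{eq:nca} the post-update living mask $g(y_t)$ reads neighboring candidate states, so a step at $u$ depends on $m_t|_{B_1(u)}$ and not on $m_t(u)$ alone; writing the step as $F(x_t|_{B_r(u)},m_t|_{B_r(u)})$ puts the step-$s$ masks in $B_{r(T-s)}(u)$ and leaves the rest of your argument unchanged.
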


\emph{Proof.} One step enlarges the dependence neighborhood by at most $r$. Induction gives the radius-$rT$ cone, including the masks used in its intermediate updates. Equivariance follows by translating every input to the same shared local rule. Counting cells at least $rT$ from all four boundaries gives the fraction bound. The composition identity follows by splitting the consecutive updates at step $T$.
\hfill$\square$

For Eq.~\eqref{eq:nca}, the complete update has dependence radius at most two. The post-update living mask reads neighboring candidate states, each computed from a radius-one perception neighborhood. The bound therefore uses $r=2$. At $T=96$, $2rT/L=\ConeRatioSmall$ for $L=72$, and the boundary-fraction bound equals $\ConeFracLarge$ at $L=288$. Spatial performance at these sizes is measured directly by the tiling evaluation in App.~\ref{app:tiling}; the proposition identifies the dependency structure and its distinction from additional temporal composition.

\section{Measurement definitions}
\label{app:instruments}

The model is the statistical unit, with five training seeds per recipe and target. Confidence intervals use $10^4$ bootstrap resamples of models within each target and group. Contrasts are evaluated separately on flower and heart. Group intervals are displayed to three decimal places.

\textbf{Targets and initialization.} Flower and heart are $40\times40$ RGBA target images placed in $72\times72$ grids. Generation starts from the standard single-cell seed used by the implementation. The four visible channels and twelve hidden channels evolve together. The loss measures RGBA reconstruction rather than similarity between the white-background illustrations in the figures.

\textbf{Quality test.} A model passes when its target RGBA MSE at $T=96$ is below $\QualityBar$. The update-mode comparison evaluates each model with its training update probability. Synchronous models use deterministic updates; asynchronous models use the quality tape with seed 42. All recipe models pass the same reconstruction threshold. Their stochastic $T=96$ errors are listed individually in Table~\ref{tab:permodelb}.

\textbf{Mode-swap difference.} The evaluator records the signed change
\[
D_{\mathrm{swap}}
=E_{1024}^{\mathrm{unmatched}}-E_{1024}^{\mathrm{matched}},
\]
where the stochastic error is averaged over three tapes. The main text reports median absolute changes among quality-passing models. The signed values are retained in Table~\ref{tab:permodela}, so an improvement after switching remains visible. The recorded medians coincide with the absolute-change medians for the groups reported in the main text. The preregistered asymmetry test uses the signed differences as specified in its protocol.

\textbf{Long-horizon outcomes.} Checkpoints are recorded at $T\in\{96,256,512,1024,2048,4096\}$. A rollout with no living cells is labeled dead. A non-finite state or $\max|x|>10^3$ is labeled diverged. For surviving finite rollouts, final MSE above $0.15$ is labeled off-target; the remainder is labeled bounded in the archived evaluator. We call this last category \emph{retained} in the tables to distinguish it from numerical boundedness alone. The $0.15$ long-horizon threshold differs from the $0.02$ reconstruction threshold. Empty and divergent rollouts can contain propagated sentinel MSE values, which are not treated as ordinary measured errors.

\textbf{Finite-time perturbation growth.} The recipe-study reference state is reached at $T=96$ using update tape seed $7100+\mathrm{model\ index}$. Perturbations use Gaussian seeds 41, 42, and 43 with amplitude $10^{-3}$ per state component. They are restricted to cells whose own alpha channel exceeds $0.1$, rather than the neighborhood-pooled living mask used by the update rule. The perturbed and unperturbed trajectories share steps 96 through 159 of the reference tape. We compute Eq.~\eqref{eq:pert} over those 64 steps, without renormalizing the perturbation, and average the three values.

States and norms use float32, followed by a scalar logarithm. A negative value denotes shrinkage of the sampled finite-amplitude perturbation over 64 steps. The perturbation is not renormalized or aligned to a maximal-growth direction, so the quantity differs from a maximum asymptotic Lyapunov exponent. Long-horizon and damage measurements evaluate the associated task behavior.

\textbf{Frozen-state Jacobian.} For an evaluation state $x_*$, define
\[
y_*=x_*+f_\theta(K*x_*),\qquad
g_*=g(x_*)\odot g(y_*).
\]
The differentiated map is $F_*(x)=g_*\odot[x+f_\theta(K*x)]$, holding the product of both living masks fixed. The recipe study uses its stochastic state at $T=96$; the factorial uses the matched-mode state at $T=512$. The network and state are converted to float64 for forward-mode Jacobian-vector products. Arnoldi requests 32 eigenvalues by modulus with tolerance $10^{-8}$ and a maximum of 3,000 iterations. The strict disk geometry in Fig.~\ref{fig:spectra} uses $|\mu+1|<2$. The preregistered factorial placement endpoint instead uses its recorded tolerance, $|\mu+1|<2.05$.

\textbf{Oscillation measurement.} The secondary factorial endpoint uses deterministic rollouts in the retained class. It records the global living-cell mean RGBA trace from steps 2,048 through 4,096, subtracts each channel's temporal mean, and takes the square root of the mean channel variance. This temporal amplitude is distinct from the spatial spectra used in the exploratory failure-state probe.

\textbf{Damage recovery.} At $T=96$, circular damage with severity $0.30$ is applied to a mature state. Clean and damaged copies share the same update masks for a recovery window of 80 steps. Let $E_{\mathrm{pre}}$ be the error before damage, $E_d$ the error immediately afterward, and $E_{\mathrm{final}}$ the recorded damaged-copy error at index $H-1$ for $H=80$. The recovery score is
\[
R=100\frac{E_d-E_{\mathrm{final}}}{E_d-E_{\mathrm{pre}}}.
\]
Full recovery gives $R=100$, no improvement gives $R=0$, and a larger error than immediately after damage gives $R<0$. Table~\ref{tab:regimes} reports the mean recovery across models within each recipe and target.

\textbf{Tiling.} Worlds of side $72m$ contain $m^2$ seeds at tile centers for $m\in\{1,2,4\}$. Each model runs for 96 stochastic steps with a frozen tape. We first take the median RGBA error over its tiles, then the median over models within a recipe and target. The ratio endpoint instead forms each model's $m=4$ to $m=1$ error ratio before taking the recipe median.

\section{Prespecified comparisons and observations}
\label{app:registry}

The comparison protocol was fixed before the corresponding evaluations. It specifies the seven quantitative criteria in Table~\ref{tab:endpoints} and a training-reliability analysis when at least six of ten synchronous models fail the quality test. The observed count of seven selects that analysis. Final-state inspection, lower-rate retraining, and living-mask removal are exploratory interventions.

The table keeps each original criterion alongside its observed value. The lattice criterion concerns all $\LatticeCells$ configurations, with $\LatticeMatched/\LatticeCells$ agreement. The $\LatticeNonMarginal/\LatticeNonMarginal$ non-marginal comparison is a subsequent diagnostic of that result, not a prespecified exclusion. Appendix~\ref{app:lattice} explains the boundary classification. The recipe contrasts and spatial comparison likewise retain their per-target intervals and error ratios.

\begin{table}[!t]
\centering
\caption{Original comparison criteria and observed quantities. Synchronous and asynchronous refer to training. The complete lattice comparison and subsequent non-marginal diagnostic are listed separately within the same row.}
\label{tab:endpoints}
\small
\setlength{\tabcolsep}{4pt}
\renewcommand{\arraystretch}{1.17}
\begin{tabularx}{\textwidth}{@{}>{\raggedright\arraybackslash}p{0.17\textwidth} >{\raggedright\arraybackslash}X >{\raggedright\arraybackslash}p{0.31\textwidth}@{}}
\toprule
Comparison & Original criterion & Observed quantities \\
\midrule
Mode-swap asymmetry & Asynchronous-to-synchronous swap ratio above 3, the same direction on both targets, and ratio CI excluding 1. & CI $\SwapRatioInterval$; directions differ between targets. \\
\addlinespace[4pt]
Spectral placement & At least 6 asynchronous models with $\rho>1.02$ and all violating multipliers within $|\mu+1|<2.05$; at most 2 qualifying synchronous survivors. & $\QualifiedAsync$ asynchronous models and $\QualifiedSync$ synchronous survivor qualify. \\
\addlinespace[4pt]
Temporal oscillation & Synchronous amplitude above twice the asynchronous amplitude on both targets. & The ordering reverses on flower. \\
\addlinespace[4pt]
Perturbation contrasts & Grow minus persist and grow minus regenerate intervals strictly above zero on both targets. & $\PositiveContrasts/4$ intervals exclude zero; heart grow minus regenerate is $\CIgrH$. \\
\addlinespace[4pt]
Long-horizon ordering & Grow median MSE at $T=4096$ above persist and regenerate on both targets. & Grow $\GrowMseLongF/\GrowMseLongH$; other groups at most $\OtherMseLongMax$. \\
\addlinespace[4pt]
Spatial error ratio & Median per-model error ratio from $m=1$ to $m=4$ below 1.2 for both recipes. & Persist $\TileRatioPersist$; regenerate $\TileRatioRegen$. \\
\addlinespace[4pt]
Lattice agreement & Agreement with simulation on all 27 configurations and at least one mean-only error. & $\LatticeMatched/\LatticeCells$ overall; subsequent non-marginal diagnostic $\LatticeNonMarginal/\LatticeNonMarginal$, with $\LatticeAnnWrong$ mean-only errors. \\
\bottomrule
\end{tabularx}

\end{table}

The swap and spectral comparisons use quality-passing models as specified by their definitions. The temporal oscillation comparison uses retained deterministic rollouts and the global RGBA trace, rather than a spatial Fourier measurement. The recovery and living-mask results answer complementary behavioral questions and are not additional tests of these original criteria.

\section{Linear lattice validation}
\label{app:lattice}

The archived experiment uses a $32\times32$ periodic scalar lattice with the five-point Laplacian,
\[
\hat a(\omega)=c(2\cos\omega_x+2\cos\omega_y-4)\in[-8c,0].
\]
Its 27 configurations combine $c\in\{0.20,0.25,0.30,0.35,0.40,0.45,0.50,0.60,0.70\}$ with $\alpha\in\{0.25,0.5,1\}$. Each configuration uses mask seeds 1, 2, and 3 and runs for up to 8,000 steps.

The constant Fourier mode is neutral and is removed by centering after every simulated update. The simulation starts with a centered Gaussian field of scale $10^{-3}$. It stops early at maximum absolute state below $10^{-14}$ or above $10^{10}$. If neither threshold is reached, it labels a run as decay when its final maximum norm is smaller than its initial norm. A configuration is simulated as stable when all three runs are labeled decay.

This finite-time classification differs from asymptotic decay at a boundary. At $c=0.25,\alpha=1$, the checkerboard multiplier is exactly $-1$ and preserves its amplitude, yet the final norm is lower than the initial mixed-mode norm. The simulation therefore labels decay even though the strict theoretical criterion does not. The raw all-configuration result is $\LatticeMatched/\LatticeCells$ agreement. A second configuration, $c=0.50,\alpha=0.50$, also lies on the mean-stability boundary. Reporting both marginal configurations separately leaves $\LatticeNonMarginal/\LatticeNonMarginal$ agreement and $\LatticeAnnWrong$ mean-only misclassifications away from the boundary. This restricted comparison is a diagnostic of the recorded result, as distinguished from the original endpoint in App.~\ref{app:registry}.

\begin{table}[!t]
\centering
\caption{All 27 lattice configurations arranged by update probability. Each block reports the mean-only prediction, second-moment prediction, simulated label, and $S$. A check denotes decay and a cross denotes no decay. A dagger marks an exactly marginal mean multiplier. Bold mean-only predictions identify the four non-marginal disagreements with simulation. An infinite $S$ denotes failure of the prerequisite $d_\alpha<1$ in the implementation.}
\label{tab:lattice}
\small
% Generated from immutable results.
{\fontsize{8}{10}\selectfont
\setlength{\tabcolsep}{4pt}
\renewcommand{\arraystretch}{1.18}
\begin{tabular*}{\textwidth}{@{\extracolsep{\fill}}ccccccccccccc@{}}
\toprule
$c$ & \multicolumn{4}{c}{$\alpha=0.25$} & \multicolumn{4}{c}{$\alpha=0.5$} & \multicolumn{4}{c}{$\alpha=1.0$} \\
\cmidrule(lr){2-5}\cmidrule(lr){6-9}\cmidrule(lr){10-13}
 & Mean & MS & Sim. & $S$ & Mean & MS & Sim. & $S$ & Mean & MS & Sim. & $S$ \\
\midrule
$0.20$ & $\checkmark$ & $\checkmark$ & $\checkmark$ & $0.34$ & $\checkmark$ & $\checkmark$ & $\checkmark$ & $0.27$ & $\checkmark$ & $\checkmark$ & $\checkmark$ & $0.00$ \\
$0.25$ & $\checkmark$ & $\checkmark$ & $\checkmark$ & $0.45$ & $\checkmark$ & $\checkmark$ & $\checkmark$ & $0.37$ & $\times$ & $\times$ & $\checkmark^\dagger$ & $\infty$ \\
$0.30$ & $\checkmark$ & $\checkmark$ & $\checkmark$ & $0.56$ & $\checkmark$ & $\checkmark$ & $\checkmark$ & $0.50$ & $\times$ & $\times$ & $\times$ & $\infty$ \\
$0.35$ & $\checkmark$ & $\checkmark$ & $\checkmark$ & $0.68$ & $\checkmark$ & $\checkmark$ & $\checkmark$ & $0.67$ & $\times$ & $\times$ & $\times$ & $\infty$ \\
$0.40$ & $\checkmark$ & $\checkmark$ & $\checkmark$ & $0.81$ & $\checkmark$ & $\checkmark$ & $\checkmark$ & $0.92$ & $\times$ & $\times$ & $\times$ & $\infty$ \\
$0.45$ & $\checkmark$ & $\checkmark$ & $\checkmark$ & $0.96$ & $\boldsymbol{\checkmark}$ & $\times$ & $\times$ & $1.35$ & $\times$ & $\times$ & $\times$ & $\infty$ \\
$0.50$ & $\boldsymbol{\checkmark}$ & $\times$ & $\times$ & $1.12$ & $\times$ & $\times$ & $\times^\dagger$ & $\infty$ & $\times$ & $\times$ & $\times$ & $\infty$ \\
$0.60$ & $\boldsymbol{\checkmark}$ & $\times$ & $\times$ & $1.51$ & $\times$ & $\times$ & $\times$ & $\infty$ & $\times$ & $\times$ & $\times$ & $\infty$ \\
$0.70$ & $\boldsymbol{\checkmark}$ & $\times$ & $\times$ & $2.02$ & $\times$ & $\times$ & $\times$ & $\infty$ & $\times$ & $\times$ & $\times$ & $\infty$ \\
\bottomrule
\end{tabular*}
}

\end{table}

\section{Spatial extension}
\label{app:tiling}

Persist and regenerate are evaluated on worlds of side 72, 144, and 288, containing one, four, and sixteen seeds respectively. Figure~\ref{fig:tiling} reports the per-target median tile errors. Every group median remains at the $10^{-4}$ scale and at least $\TileMedMargin$ times below the quality threshold. The largest per-model median tile error is $\TileMaxAbs$.

The median per-model error ratio is $\TileRatioPersist$ for persist and $\TileRatioRegen$ for regenerate. Relative to the original threshold of 1.2, persist is above the threshold and regenerate below it. The absolute errors and ratios therefore describe different aspects of spatial extension. The former measures the achieved image quality on larger worlds; the latter measures its change relative to each model's single-tile error.

\begin{figure}[!t]
\centering
\includegraphics[width=0.76\textwidth]{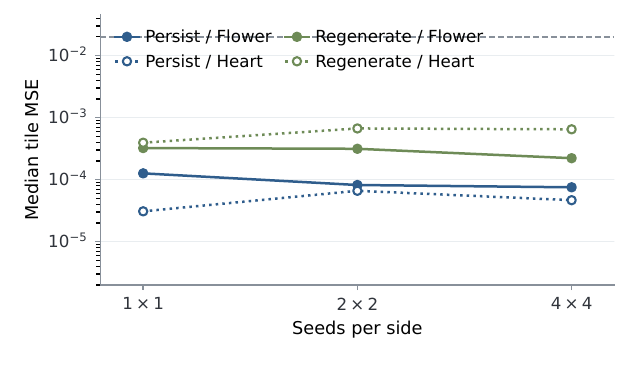}
\caption{\textbf{Absolute error under spatial extension.} Per-target median tile MSE for one, four, and sixteen simultaneous seeds at $T=96$. The dashed line is the quality threshold. The ratio endpoint and the absolute-error observation are distinct statements.}
\label{fig:tiling}
\end{figure}

\section{Exploratory interventions}
\label{app:probes}

\subsection{Final states of failed synchronous models}

The probe runs each factorial checkpoint deterministically from the seed for 96 steps and computes channel-averaged two-dimensional spectra of its RGBA state and update field. The high-frequency fraction is the share of non-DC power with $\max(|\omega_x|,|\omega_y|)\geq\pi/2$. The checkerboard fraction uses frequencies within maximum-norm distance $\pi/4$ of $(\pi,\pi)$.

Six quality-failing synchronous models are empty and have zero non-DC power in both objects. The seventh, heart seed three, is alive with deterministic MSE approximately $\FailureAliveMse$. Its high-frequency fractions are $\FailureAliveStateHF$ for the state and $\FailureAliveUpdateHF$ for the update. This final-state measurement identifies extinction as the most common terminal outcome, complementing the loss traces that follow training over time.

\subsection{Lower learning rates}

Flower seeds zero and two fail under synchronous training at $2\cdot10^{-3}$. Retraining each at $10^{-3}$ and $5\cdot10^{-4}$ gives four additional runs, with the other settings unchanged. All four pass the quality test. Figure~\ref{fig:lrprobe} shows the trajectories for these two successful alternatives to the original learning rate.

\begin{figure}[!t]
\centering
\includegraphics[width=0.8\textwidth]{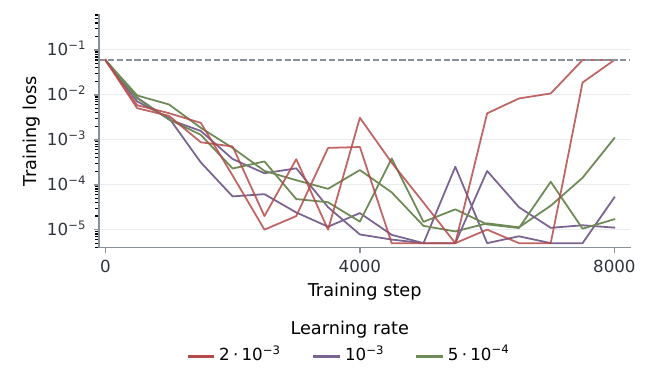}
\caption{\textbf{Lower-rate synchronous retraining.} Two failed flower configurations are retrained at each of two lower learning rates. All four runs pass the quality test. The traces are exploratory and use the same training budget as the corresponding main runs. Zero-valued logged losses are drawn at $5\cdot10^{-6}$ on the logarithmic axis.}
\label{fig:lrprobe}
\end{figure}

\subsection{Single-run checkpoint replay}
\label{app:checkpoint-probe}

We replayed one failed synchronous flower configuration using the original persist recipe, initialization seed, constant learning rate $2\cdot10^{-3}$, and 8,000-step budget. The first checkpoint saved when the logged minibatch loss fell below $10^{-3}$ occurred at step 1,053. From the seed, this checkpoint has $T=96$ MSE $0.002099$ and $T=4096$ MSE $0.015803$, with 552 living cells. The final checkpoint becomes empty and dies by step 47; its empty-canvas MSE is $0.059658$. This one-run replay confirms that a temporarily usable checkpoint can occur in a failed configuration. It does not estimate the frequency of such checkpoints, identify why later updates change the outcome, or establish a training mechanism. The saved weights, training log, evaluation images, and protocol record are in the checkpoint-probe artifact directory.

\subsection{Removing the living masks}
\label{app:gateoff}

For each of the thirty recipe models, we replay the exact update-mask tape from its original long rollout. The three conditions retain both living masks throughout, remove both after $T=96$, or omit both from the seed. The gate-live condition reproduces all thirty original outcome labels.

Neither mask-removal condition produces a divergent or dead run. At $T=\HorizonLong$, the largest $\max|x|$ is approximately $\GateOffMaxAbs$ after removal at maturity and $\GateOffSeedMaxAbs$ after removal from the seed. For the mature-state intervention, the median across models is $\GateOffMedAbs$. Among models with a real multiplier above one, the largest final magnitude is $\GateOffCarrierMaxAbs$. Table~\ref{tab:gateoff} retains the per-group ranges and outcome counts.

Removing the masks at maturity changes two outcomes, both regenerate-heart models that become off-target. The remaining 28 outcomes are unchanged. Living-cell counts increase by more than 20\% for $\GateOffLeakMat$ models in the mature-state intervention and $\GateOffLeakSeed$ in the seed intervention. The largest increase is approximately $\GateOffLeakFactor$ times the gate-live count. These finite-horizon interventions distinguish the masks' spatial and pattern-retention effects from numerical boundedness.

\begin{table}[!t]
\centering
\caption{Living-mask ablation with matched random update tapes. Each group contains five models. Outcome counts give retained, off-target, and diverged or dead runs in that order. State magnitudes are ranges across the five models at $T=\HorizonLong$.}
\label{tab:gateoff}
\small
% Generated from immutable results.
{\fontsize{8}{10}\selectfont
\setlength{\tabcolsep}{4pt}
\renewcommand{\arraystretch}{1.18}
\begin{tabular*}{\textwidth}{@{\extracolsep{\fill}}llcccccc@{}}
\toprule
 &  & \multicolumn{2}{c}{Masks active} & \multicolumn{2}{c}{Removed at $T=96$} & \multicolumn{2}{c}{Absent from seed} \\
\cmidrule(lr){3-4}\cmidrule(lr){5-6}\cmidrule(lr){7-8}
Recipe & Target & Outcomes & $\max|x|$ & Outcomes & $\max|x|$ & Outcomes & $\max|x|$ \\
\midrule
Grow & Flower & 1/4/0 & $[1.02,3.97]$ & 1/4/0 & $[1.02,16.54]$ & 1/4/0 & $[1.03,24.35]$ \\
 & Heart & 1/4/0 & $[1.07,8.55]$ & 1/4/0 & $[1.10,9.62]$ & 1/4/0 & $[1.12,10.17]$ \\
\addlinespace[3pt]
Persist & Flower & 5/0/0 & $[1.01,1.09]$ & 5/0/0 & $[1.01,1.09]$ & 5/0/0 & $[1.01,1.02]$ \\
 & Heart & 5/0/0 & $[0.99,1.02]$ & 5/0/0 & $[0.99,1.05]$ & 5/0/0 & $[0.99,1.04]$ \\
\addlinespace[3pt]
Regenerate & Flower & 5/0/0 & $[1.02,1.39]$ & 5/0/0 & $[1.02,1.31]$ & 5/0/0 & $[1.02,1.14]$ \\
 & Heart & 5/0/0 & $[1.00,1.35]$ & 3/2/0 & $[1.00,1.21]$ & 3/2/0 & $[1.00,1.46]$ \\
\bottomrule
\end{tabular*}
}

\end{table}

\section{Frozen spectral measurements}
\label{app:eigs}

The synchronous frozen-living-mask Jacobian has dimension 82,944. Forward-mode Jacobian-vector products allow Arnoldi iteration without forming the matrix, using the settings in App.~\ref{app:instruments}. Each model has a returned window of 32 eigenvalues, with no recorded solver error. The stated tolerance is the solver's stopping tolerance; the reported quantities are the returned eigenvalues and their positions relative to the stability disks.

Figure~\ref{fig:spectra} shows the returned eigenvalue locations and their relation to the measured long-horizon outcomes. It is included as a diagnostic comparison, not as a spectrum-based predictor of nonlinear fate.

\begin{figure}[!t]
\centering
\includegraphics[width=\textwidth]{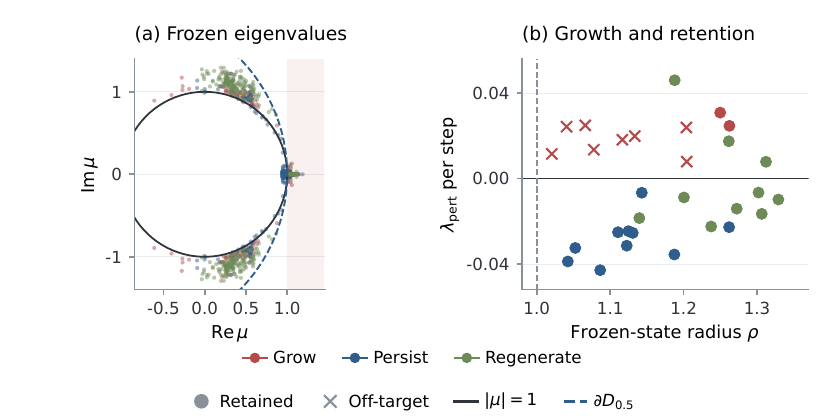}
\caption{\textbf{A frozen local expansion measurement does not determine long-horizon task outcome.} (a) The largest-modulus eigenvalue returned for each trained model at its frozen mature state. (b) Frozen-state radius and finite-time perturbation growth, with markers showing long-horizon outcome. The same condition $\rho>1$ occurs in retained and off-target models.}
\label{fig:spectra}
\end{figure}

Among the $\NViol$ returned eigenvalues with modulus above one, $\NViolInDisk$ lie inside the strict $D_{0.5}$ disk, $\NRealUnforgivable$ are real and exceed one, and $\NViolRePosCplx$ are complex with real part above one. Regenerate contributes $\ViolDiskRegen$ values in the disk. Grow contributes $\ViolReGrow$ values with real part above one, compared with $\ViolReRegen$ for regenerate. Figure~\ref{fig:spectralcounts} displays the full recipe breakdown.

\begin{figure}[!t]
\centering
\includegraphics[width=0.78\textwidth]{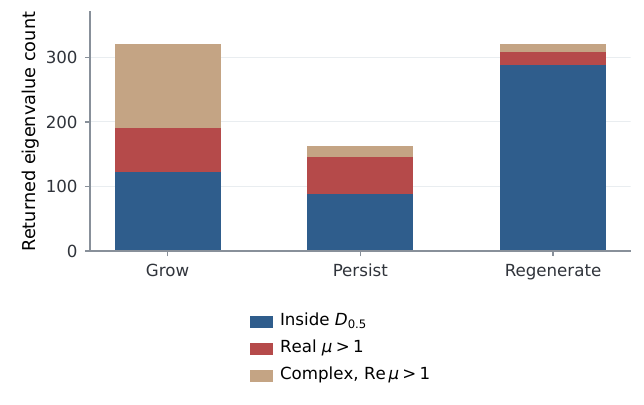}
\caption{\textbf{Returned multipliers outside the unit disk.} Counts are grouped by recipe and position relative to $D_{0.5}$ and $\operatorname{Re}\mu=1$. They refer to the largest-modulus 32-eigenvalue window per model, not the complete spectrum.}
\label{fig:spectralcounts}
\end{figure}

For $\NEigSaturated$ models, all $\EigWindow$ returned eigenvalues lie outside the unit disk. The total count is therefore a lower bound on the number of such multipliers in the full Jacobians. Real multipliers above one occur on all $\GrowCarriers$ grow models, all $\PersistCarriers$ persist models, and $\RegenCarriers$ regenerate models. Eight of those grow models become off-target, while all fourteen persist and regenerate models retain the pattern. These counts preserve the distinction between a frozen local expansion and an observed long-horizon outcome.

\section{Individual model results}
\label{app:permodel}

The following tables retain the model-level measurements behind the summaries. Table~\ref{tab:permodelb} contains the thirty asynchronous recipe models. Table~\ref{tab:permodela} contains the twenty factorial entries, ten of which reuse the persist models from the recipe study. The two tables therefore describe forty distinct base models.

\begin{table}[!t]
\centering
\caption{The thirty recipe models. Reconstruction errors, finite-time perturbation growth, largest returned frozen multiplier, and stochastic long-horizon outcome are listed for every training seed. Retained is the archived bounded outcome class. Outcome labels use unrounded measurements.}
\label{tab:permodelb}
\small
% Generated from immutable results.
{\fontsize{8.5}{10.5}\selectfont
\setlength{\tabcolsep}{4pt}
\renewcommand{\arraystretch}{1.18}
\begin{tabular*}{\textwidth}{@{\extracolsep{\fill}}llcccccl@{}}
\toprule
Recipe & Target & Seed & MSE@96 & $\lambda_{\mathrm{pert}}$ & $\rho$ & MSE@4096 & Outcome \\
\midrule
Grow & Flower & 0 & $6.6\times 10^{-5}$ & $+0.0198$ & $1.134$ & $0.39$ & Off-target \\
 &  & 1 & $1.4\times 10^{-5}$ & $+0.0247$ & $1.262$ & $0.05$ & Retained \\
 &  & 2 & $2.0\times 10^{-3}$ & $+0.0182$ & $1.117$ & $0.24$ & Off-target \\
 &  & 3 & $6.8\times 10^{-5}$ & $+0.0135$ & $1.078$ & $0.26$ & Off-target \\
 &  & 4 & $2.3\times 10^{-5}$ & $+0.0080$ & $1.204$ & $0.35$ & Off-target \\
\addlinespace[3pt]
 & Heart & 0 & $1.9\times 10^{-5}$ & $+0.0239$ & $1.204$ & $0.34$ & Off-target \\
 &  & 1 & $6.7\times 10^{-5}$ & $+0.0308$ & $1.250$ & $0.05$ & Retained \\
 &  & 2 & $5.1\times 10^{-6}$ & $+0.0115$ & $1.020$ & $0.36$ & Off-target \\
 &  & 3 & $1.3\times 10^{-5}$ & $+0.0249$ & $1.066$ & $0.36$ & Off-target \\
 &  & 4 & $1.9\times 10^{-5}$ & $+0.0243$ & $1.041$ & $0.15$ & Off-target \\
\addlinespace[3pt]
Persist & Flower & 0 & $5.6\times 10^{-5}$ & $-0.0314$ & $1.123$ & $2.6\times 10^{-5}$ & Retained \\
 &  & 1 & $1.1\times 10^{-4}$ & $-0.0227$ & $1.262$ & $9.8\times 10^{-3}$ & Retained \\
 &  & 2 & $1.7\times 10^{-3}$ & $-0.0066$ & $1.143$ & $2.2\times 10^{-3}$ & Retained \\
 &  & 3 & $2.1\times 10^{-5}$ & $-0.0324$ & $1.053$ & $1.1\times 10^{-5}$ & Retained \\
 &  & 4 & $2.8\times 10^{-4}$ & $-0.0254$ & $1.131$ & $6.5\times 10^{-4}$ & Retained \\
\addlinespace[3pt]
 & Heart & 0 & $3.3\times 10^{-5}$ & $-0.0388$ & $1.042$ & $2.1\times 10^{-5}$ & Retained \\
 &  & 1 & $7.7\times 10^{-4}$ & $-0.0246$ & $1.125$ & $9.2\times 10^{-4}$ & Retained \\
 &  & 2 & $6.1\times 10^{-5}$ & $-0.0355$ & $1.187$ & $8.5\times 10^{-5}$ & Retained \\
 &  & 3 & $4.0\times 10^{-3}$ & $-0.0251$ & $1.111$ & $4.1\times 10^{-3}$ & Retained \\
 &  & 4 & $4.4\times 10^{-5}$ & $-0.0428$ & $1.086$ & $1.1\times 10^{-5}$ & Retained \\
\addlinespace[3pt]
Regenerate & Flower & 0 & $1.2\times 10^{-3}$ & $-0.0088$ & $1.200$ & $0.04$ & Retained \\
 &  & 1 & $2.7\times 10^{-4}$ & $-0.0141$ & $1.272$ & $1.1\times 10^{-3}$ & Retained \\
 &  & 2 & $6.4\times 10^{-5}$ & $+0.0175$ & $1.261$ & $5.9\times 10^{-5}$ & Retained \\
 &  & 3 & $6.0\times 10^{-4}$ & $-0.0098$ & $1.329$ & $5.2\times 10^{-5}$ & Retained \\
 &  & 4 & $4.3\times 10^{-4}$ & $-0.0224$ & $1.237$ & $7.0\times 10^{-4}$ & Retained \\
\addlinespace[3pt]
 & Heart & 0 & $5.7\times 10^{-4}$ & $-0.0065$ & $1.302$ & $1.3\times 10^{-3}$ & Retained \\
 &  & 1 & $1.7\times 10^{-3}$ & $+0.0078$ & $1.312$ & $1.5\times 10^{-3}$ & Retained \\
 &  & 2 & $3.4\times 10^{-4}$ & $-0.0165$ & $1.306$ & $5.2\times 10^{-4}$ & Retained \\
 &  & 3 & $4.8\times 10^{-4}$ & $-0.0185$ & $1.140$ & $6.8\times 10^{-4}$ & Retained \\
 &  & 4 & $5.0\times 10^{-3}$ & $+0.0460$ & $1.188$ & $0.10$ & Retained \\
\addlinespace[3pt]
\bottomrule
\end{tabular*}
}

\end{table}

\begin{table}[!t]
\centering
\caption{The twenty factorial entries. The last column is the signed mode-swap difference, so a negative value denotes lower error in the unmatched mode. A dagger denotes an empty model for which the frozen quality evaluator records a sentinel rather than an ordinary error value. Its true empty-canvas image loss is identified separately in the training traces and final-state probe.}
\label{tab:permodela}
\small
% Generated from immutable results.
{\fontsize{8.5}{10.5}\selectfont
\setlength{\tabcolsep}{4pt}
\renewcommand{\arraystretch}{1.18}
\begin{tabular*}{\textwidth}{@{\extracolsep{\fill}}llccllc@{}}
\toprule
Training & Target & Seed & Quality MSE & Pass & Det. outcome & \shortstack{Signed swap\\$\Delta\mathrm{MSE}$} \\
\midrule
sync & Flower & 0 & Dead$^\dagger$ & No & Dead & n/a \\
 &  & 1 & $2.1\times 10^{-3}$ & Yes & Retained & $2.3\times 10^{-3}$ \\
 &  & 2 & Dead$^\dagger$ & No & Dead & n/a \\
 &  & 3 & Dead$^\dagger$ & No & Dead & n/a \\
 &  & 4 & Dead$^\dagger$ & No & Dead & n/a \\
\addlinespace[4pt]
 & Heart & 0 & $1.0\times 10^{-3}$ & Yes & Retained & $1.9\times 10^{-3}$ \\
 &  & 1 & Dead$^\dagger$ & No & Dead & n/a \\
 &  & 2 & $1.5\times 10^{-6}$ & Yes & Retained & $3.7\times 10^{-3}$ \\
 &  & 3 & $0.03$ & No & Retained & n/a \\
 &  & 4 & Dead$^\dagger$ & No & Dead & n/a \\
\addlinespace[4pt]
async & Flower & 0 & $1.7\times 10^{-5}$ & Yes & Retained & $1.0\times 10^{-3}$ \\
 &  & 1 & $1.1\times 10^{-4}$ & Yes & Retained & $7.8\times 10^{-3}$ \\
 &  & 2 & $1.9\times 10^{-3}$ & Yes & Retained & $2.5\times 10^{-3}$ \\
 &  & 3 & $2.9\times 10^{-5}$ & Yes & Retained & $-2.9\times 10^{-4}$ \\
 &  & 4 & $8.7\times 10^{-4}$ & Yes & Retained & $3.4\times 10^{-3}$ \\
\addlinespace[4pt]
 & Heart & 0 & $1.5\times 10^{-4}$ & Yes & Retained & $3.3\times 10^{-4}$ \\
 &  & 1 & $6.6\times 10^{-4}$ & Yes & Retained & $1.2\times 10^{-3}$ \\
 &  & 2 & $7.8\times 10^{-5}$ & Yes & Retained & $6.8\times 10^{-4}$ \\
 &  & 3 & $4.6\times 10^{-3}$ & Yes & Retained & $2.2\times 10^{-3}$ \\
 &  & 4 & $5.1\times 10^{-5}$ & Yes & Retained & $1.0\times 10^{-3}$ \\
\addlinespace[4pt]
\bottomrule
\end{tabular*}
}

\end{table}

\section{Reproducibility details}
\label{app:repro}

\textbf{Training and hardware.} The study contains thirty asynchronous recipe models and ten synchronous persist models. Reusing ten persist models in the update-mode comparison gives forty distinct base training runs. Four additional runs test lower learning rates. Training uses PyTorch on NVIDIA H200 hardware with float32 states and computations; frozen-Jacobian measurements use float64.

\textbf{Hyperparameters.} The network uses sixteen state channels, 128 hidden units, a zero-initialized output layer, and alpha threshold 0.1 for both living-mask checks. Training uses Adam at constant rate $2\cdot10^{-3}$, gradient-norm clipping at 1.0, batch size eight, 8,000 steps, and rollout lengths uniformly sampled from 64 through 96. Pool recipes use 1,024 states and seed replacement. Regenerate damages half of each batch with circular masks. The overflow penalty is based on deviation from clipped state values. The synchronous factorial initialization is seeded by $3100+10\,\mathrm{target\ index}+\mathrm{seed}$.

\textbf{Evaluation controls.} The recipe checkpoints are shared across the trajectory, spectral, damage, and spatial measurements. Update-mode swaps keep model weights fixed. Living-mask interventions reuse the original random update tapes, and perturbation comparisons share masks between clean and disturbed states. These controls preserve the intended comparison unit across the reported measurements.

\end{document}